\newcommand{\eff}{{\mathsf{eff}}}
\newcommand{\gps}{{\mathsf{GPS}}}
\newcommand{\bv}{{\big\Vert}}
\newtheorem{definition}{Definition}
\newtheorem{proposition}{Proposition}
\newtheorem{corollary}{Corollary}
\newcommand{\upstairs}[1]{\textsuperscript{#1}}
\newcommand{\affilone}{1}
\newcommand{\affiltwo}{2}
\newcommand{\affilthree}{3}
\newcommand{\affilfour}{4}
\newcommand{\y}{\mathbf{y}}
\newcommand{\Y}{\mathbf{Y}}
\newcommand{\g}{\mathbf{g}}
\newcommand{\F}{\mathbf{F}}
\def\<{\begin{equation}}
\def\>{\end{equation}}
\begin{document}


\begin{center}

{\Large \bf Universal Smoothed Score Functions for Generative Modeling}

  \vspace*{.2in}
  
  \begin{tabular}{ccc}
  
    Saeed Saremi\upstairs{\affilone, \affiltwo}\quad 
    Rupesh Kumar Srivastava\upstairs{\affilthree}\quad    
    Francis Bach\upstairs{\affilfour}\quad
   
   \vspace*{.1in} \\
   
    \upstairs{\affilone}UC Berkeley
    \upstairs{\affiltwo}Prescient Design, Genentech, Roche 
    \upstairs{\affilthree}NNAISENSE \\
    \upstairs{\affilfour}Inria, Ecole Normale Sup\'erieure, PSL Research University
  \end{tabular}
  
  \vspace*{.2in}

\begin{abstract}
   We consider the problem of generative modeling based on smoothing an unknown density of interest in $\mathbb{R}^d$ using factorial kernels with $M$ independent Gaussian channels with equal noise levels introduced by~\citet{saremi2022multimeasurement}. First, we fully characterize the time complexity of learning the resulting smoothed density in $\mathbb{R}^{Md}$, called M-density, by deriving a universal form for its parametrization in which the score function is by construction permutation equivariant. Next, we study the time complexity of sampling an M-density by analyzing its condition number for Gaussian distributions. This spectral analysis gives a geometric insight on the ``shape'' of M-densities as one increases $M$. Finally, we present results on the sample quality in this class of generative models on the CIFAR-10 dataset where we report Fr\'echet inception distances (14.15), notably obtained with a single noise level on long-run fast-mixing MCMC chains.
\end{abstract}
  
\end{center}

\section{Introduction}

Smoothing a density with a kernel is a technique in nonparametric density estimation that goes back to~\citet{parzen1962estimation} at the birth of modern statistics. There has been a recent interest in estimating smoothed densities that are obtained by Gaussian convolution~\citep{saremi2019neural, goldfeld2020convergence}, where instead of the random variable $X$ the problem is to model the random variable $Y=X+\mathcal{N}(0,\sigma^2 I_d)$ based on a finite number of independent samples $\{x_i\}_{i=1}^n$ drawn from $p_X$. There is a subtle difference between this problem and the problem addressed by~\citet{parzen1962estimation}. In the original problem (estimating $p_X$),  the bandwidth of the Gaussian kernel $\sigma$ is adjusted depending on the number of samples (typically tending to zero when the sample size goes to infinity). Here the kernel bandwidth $\sigma$ is \emph{fixed}. As one might expect, learning $p_Y$ is simpler than learning $p_X$, but the problem is quite rich with deep connections to empirical Bayes and score matching as we highlight next.

Empirical Bayes formulated by~\citet{robbins1956empirical} is concerned with the problem of estimating the random variable~$X$ given a single noisy observation $Y=y$ assuming the noise model $p_{Y|X}$ is known. A classical result states that the least-squares estimator of $X$ is the Bayes estimator:
$$ \widehat{x}(y) = \frac{\int x p(y|x) p(x) dx }{\int p(y|x) p(x) dx}.$$ But the remarkable result obtained in Robbin's seminal paper is that the Bayes estimator can be written in closed form purely in terms of $p_Y$ for a variety of noise models: the explicit knowledge of $p_X$ is not required in this estimation problem. In addition, this dependency is only in terms of the \emph{unnormalized} $p_Y$ for all known empirical Bayes estimators, although this fact was not highlighted by~\citet{robbins1956empirical}.  For isotropic Gaussian the estimator takes the form\footnote{This 
result for Gaussian noise models was first derived by~\citet{miyasawa1961empirical} but has a rich history of its own; see~\citet{raphan2011least} for a survey.} $$\widehat{x}(y) = y + \sigma^2 g(y),$$
where $g(y)= \nabla \log p(y)$ is known in the literature as the score function~\citep{hyvarinen2005estimation}.

This is the starting point in neural empirical Bayes (NEB)~\citep{saremi2019neural}, where the score function is parametrized using a neural network, arriving at the following learning objective
$$ \mathcal{L}(\theta) = \mathbb{E}_{(x,y)\sim p(y|x) p(x)} \Vert x-\widehat{x}_\theta(y)\Vert^2.$$
 Algorithmically, NEB is attractive on two fronts: (i) the learning/estimation problem is reduced to the optimization of the least-squares denoising objective, where MCMC sampling is not required during learning, (ii) generative modeling is reduced to sampling $p_Y$ which is better conditioned than sampling $p_X$ (see~\autoref{sec:mixing}) combined with the estimation of $X$ which is a deterministic computation itself, referred to as walk-jump sampling (WJS).
 
 The main problem with NEB, from the perspective of generative modeling (sampling $p_X$), is that we cannot sample from $p(x|y)$, and we do not have a control over how concentrated $p(x|y)$ is around its mean $\widehat{x}(y)=\mathbb{E}[X|Y=y].$ A solution to this problem was formulated by~\citet{saremi2022multimeasurement}, where the noise model  in NEB was replaced with a multimeasurement noise model (MNM):
 $$ p(\y|x) = \prod_{m=1}^M p(y_m|x),$$
 where the bold-faced $\y$ denotes the multimeasurement random variable $\y=(y_1,\dots,y_M)$. As we review in \autoref{sec:background}, the algorithmic attractions of NEB carry over to Gaussian MNMs with the added benefits that by simply increasing the number of measurements $M$, the posterior $p(x|\y)$ automatically concentrates around its mean. Of particular interest is the case where the $M$ noise levels are identical, therefore the M-density $p(\y)$ is permutation invariant\textemdash this class of models is denoted by $(\sigma, M)$ which is our focus in this paper. 
 \subsection{Contributions}
 Our theoretical contributions are concerned with answering the following two questions:
 \begin{itemize}
 	\item \emph{What is the time complexity of learning M-densities?}  We show that the M-densities associated with $(\sigma, M)$ and $(\sigma', M')$ can be mapped to each other if $\sigma/\sqrt{M} = \sigma'/\sqrt{M'}$. The permutation-invariant Gaussian M-densities are therefore grouped into universality classes $[\sigma_\eff]$ where $\sigma_\eff \coloneqq \sigma/\sqrt{M}$.
We arrive at a parametrization scheme for the score function associated with M-densities, called $\gps$, that is by construction permutation equivariant with respect to the permutation of the measurement indices. As a side effect of the $\gps$ parametrization, we derive a single estimator of $X$ given $\Y=\y$ instead of $M$ (approximately equal) estimators by~\citet{saremi2022multimeasurement}.
 	\item \emph{What is the time complexity for sampling M-densities?} Knowing that M-densities are grouped into universality classes, the more subtle question is: which member has better mixing time properties? This is an important question to answer in understanding the generative modeling properties of M-densities. This question is a difficult one in its full generality, but to shed light on it we assume the original density $p_X$ is a non-isotropic Gaussian, and we study the full spectrum of the corresponding M-density. The calculation gives insight on the ``geometry'' of M-densities as one increases $M$. See  \autoref{fig:schematic} for a schematic.
 \end{itemize}
 \clearpage
 Experiments are focused on the generative modeling problem on the CIFAR-10 dataset~\citep{krizhevsky2009learning}. This dataset has proved to be challenging for generative models due to the diversity of image classes present. The performance of the generative models are measured in FID score~\citep{heusel2017gans} (the lower score is better). As an example, a sophisticated model like BigGAN~\citep{brock2018large}, in the generative adversarial networks \citep{goodfellow2014generative} family, achieves the FID score of 14.73. Our framework is based on a simple denoising objective with a single noise level, yet despite its simple structure we can achieve  the FID score of {\bf 14.15}, which is remarkable in this class of models. Our experimental results question the current perception in the field that denoising models with a single noise level cannot be good generative models. 
     
 \begin{figure}[t!] 
\begin{center}
\hspace{1cm} \begin{subfigure}[$p(x)$]
 {\includegraphics[width=0.32\textwidth]{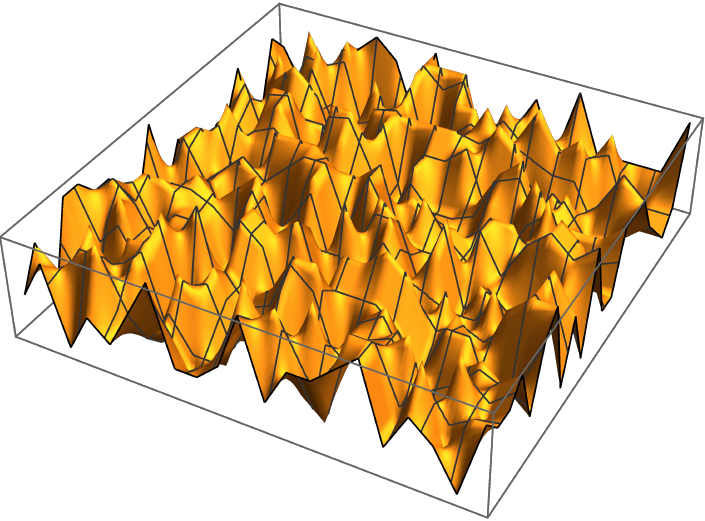}}
\end{subfigure}
\begin{subfigure}[$p(\y)$]
 {\includegraphics[width=0.24\textwidth]{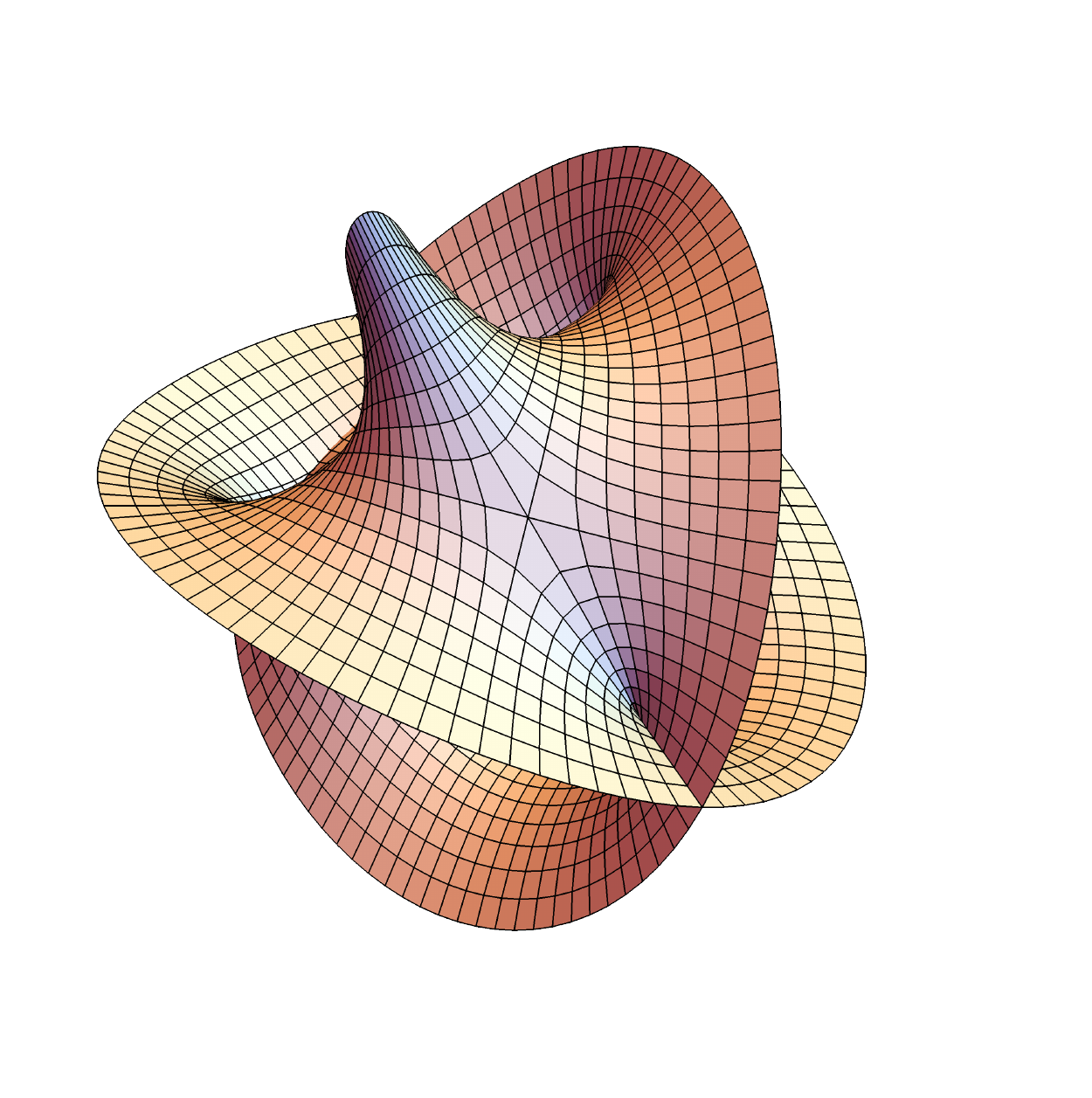}}
\end{subfigure}
\end{center}
\caption{({\it The geometry of M-densities}) (a) Schematic of a complex density in $\mathbb{R}^d$ ($d=2$). (b) The plot represents the manifold associated with the corresponding permutation-invariant M-density in  $\mathbb{R}^{Md}$. The schematic is meant to capture the fact that the M-density is symmetric and it is smoother than the original density. } 
\label{fig:schematic}
 \end{figure}
      
\subsection{Related Work}
\label{sec:related}

The work directly related to this paper is by \citet{saremi2022multimeasurement} who introduced a generative modeling framework based on smoothing an unknown density of interest with factorial kernels with $M$ channels. Our focus here is on Gaussian kernels and our main contribution is to show any single measurement model $(\sigma, 1)$ can be mapped to a multimeasurement one $(\sigma \sqrt{M}, M)$; no additional learning is required. In particular, in our work the neural network inputs are in $\mathbb{R}^d$ as opposed to $\mathbb{R}^{Md}$ in the earlier work. In addition, there were open questions regarding the role $M$ in the sampling complexity in the earlier work that this work aims to address.

At a broader level, this work is related to the research on denoising density models, a class of probabilistic models that grew from the literature on score matching and denoising autoencoders~\citep{hyvarinen2005estimation, vincent2011connection, alain2014regularized, saremi2018deep}. These models were not successful in the past on challenging generative modeling tasks, e.g. on the CIFAR-10 dataset, which in turn led to research on denoising objectives with multiple noise levels~\citep{song2019generative}. In our experiments, we revisit denoising density models with a single noise level. In particular, our experimental results question the current perception in the field around the topic of single vs. multiple noise scales. As an example, we point out that the FID score in this paper (\num{14.15}) is significantly lower than the one reported by~\citet{song2019generative} (\num{25.32}) which was obtained using annealed Langevin MCMC with multiple noise scales. Also see \citet{jain2022journey} for a recent work on score-based generative modeling with a single noise level. 
 
\section{Background} \label{sec:background}
In this section, we review smoothing with factorial kernels and their use for generative modeling. We refer to~\citet{saremi2022multimeasurement} for more details and references.
\paragraph{Factorial Kernels.} Smoothing a density  with a (Gaussian) kernel is a well-known technique in nonparametric density estimation that goes back to~\citet{parzen1962estimation}; see~\citet{hastie2009elements} for a general introduction to kernels. Given a density $p_X$ in $\mathbb{R}^d$ one can construct a smoother density $p_Y$ in $\mathbb{R}^d$ by convolving it with a positive-definite kernel $k: \mathbb{R}^d \times \mathbb{R}^d \rightarrow \mathbb{R} $:
$$ p(y) = \int k(x,y) p(x) dx.$$
In this paper we only consider (translation-invariant) isotropic Gaussian kernels, where one can also have a dual perspective on the kernel as the conditional density $k(x,y)=p(y|x)$, where
$$ p(y|x) = \frac{1}{Z(\sigma)}\exp\left( - \frac{\Vert y-x\Vert^2}{2\sigma^2}\right) \eqqcolon \mathcal{N}(y;x,\sigma^2 I_d). $$
Here $Z(\sigma)$ is the partition function associated with the isotropic Gaussian. From this angle, smoothing a density with an isotropic Gaussian kernel can also be expressed in terms of random variables as follows:
$$ Y = X + \mathcal{N}(0,\sigma^2 I_d).$$
A factorial kernel is a generalization of the above where the kernel $k(x,y)$ takes the following factorial form with $M$ kernel components:
$$ k(x,\y) = \prod_{m=1}^M k(x,y_m)\text{, where } \y=(y_1,\dots,y_M).$$
For isotropic Gaussian kernels this is equivalent to $Y_m = X + \mathcal{N}(0,\sigma^2 I_d)$ for $m \in [M]$ (with independent Gaussians). The kernel $k(x,\y)$ is referred to as multimeasurement noise model (MNM).\footnote{One can view smoothing with a factorial kernel in the context  of a communication system~\citep{shannon1948mathematical} with $M$ independent noise channels, i.e., given $X=x$ samples $(y_1,\dots,y_m)$ are obtained by adding $M$ independent  isotropic Gaussian noise to $x$.} The result of the convolution with a Gaussian MNM is referred to as Gaussian M-density associated with the random variable $\Y=(Y_1,\dots,Y_M)$ which takes values in $\mathbb{R}^{Md}$. Given $\{x_i\}_{i=1}^n$, independent draws from the density $p_X$, we are interested in estimating the associated M-density for a given fixed noise level $\sigma$.
\paragraph{Multimeasurement Bayes Estimators.} How can we go about estimating the M-density? One approach is to formulate it as a learning problem~\citep{vapnik1999nature} by parametrizing the M-density (say with a neural network) and devising an appropriate learning objective. For $M=1$, the learning objective is given by:
\< \label{eq:objective} \mathcal{L}(\theta) = \mathbb{E}_{(x,y)\sim p(x) p(y|x)} \Vert x - \widehat{x}_\theta(y)\Vert^2,\>
where $\widehat{x}_\theta(y)$ is a parametrization of the Bayes estimator of $X$ given $Y=y$ in terms of $p_\theta(y)$. This approach heavily relies on the fact that the Bayes estimator $\widehat{x}(y)$ can indeed be expressed in closed form in terms of $p(y)$,  which is a key result at the heart of empirical Bayes~\citep{robbins1956empirical}. In addition, for Gaussian kernels, the Bayes estimator can be expressed in terms of the score function $\nabla \log p(y)$, a result that goes back to~\citet{miyasawa1961empirical}, therefore for learning $p(y)$ one can ignore its partition function~\citep{saremi2019neural}. This key result in the empirical Bayes literature was extended by~\citet{saremi2022multimeasurement} to Poisson and Gaussian MNMs, where for Gaussian MNMs, $\widehat{x}(\y)=\mathbb{E}[X|\Y=\y]$ takes the following form:
\< \label{eq:xhat_m} \widehat{x}(\y) = y_m + \sigma^2 \nabla_m \log p(\y),\>
where $m \in [M]$ is an arbitrary measurement index (the result is invariant to this choice). Note that the Bayes estimator $\widehat{x}(\y)$ only depends on the score function associated with the M-density, therefore one can use~\autoref{eq:objective} as the objective for learning the energy/score function associated with the M-density by simply replacing $y$ with $\y=(y_1,\dots,y_M)$.

\paragraph{Walk-Jump Sampling.} Following learning the score function $\nabla \log p(\y)$, one can use Langevin MCMC to draw exact samples from $p(\y)$. For $M=1$ the density $p(y)$ is smoother than $p(x)$ and MCMC is assured to mix faster. What about drawing samples from $p(x)$? The idea behind walk-jump sampling (WJS) is that one can indeed use the score function $\nabla \log p(y)$ to estimate $X$ thus arriving at approximate samples from $p(x)$~\citep{saremi2019neural}. There is clearly a trade-off here: by decreasing $\sigma$ the estimate of $X$ becomes more and more accurate (WJS becomes more and more exact) but this comes at the cost of sampling a less smooth $p(y)$ where MCMC has a harder time. On the surface, what is intriguing about M-density is that one can keep $\sigma$ to be ``large'' and still have a control on generating exact samples from $p(x)$ by simply increasing $M$. The full picture on the effects of increasing $M$ is more complex which we discuss after our analysis in \autoref{sec:mixing}.

\section{Universal M-densities}\label{sec:universal}
In this section we derive a general  expression for the M-density $p(\y)$ and the  score function $\nabla \log p(\y)$ for Gaussian MNMs with equal noise levels $\sigma$. For equal noise levels, the M-density (resp. score function) is permutation invariant (resp. equivariant) under the permutation of measurement indices~\citep{saremi2022multimeasurement}. However, it is not clear a priori how this invariance/equivariance should be reflected in the parametrization. The calculation below clarifies this issue, where we arrive at a general permutation invariant (resp. equivariant) parametrization for the M-density (resp. score function) in which the empirical mean of the $M$ measurements
$$ \overline{y} =  M^{-1} \sum_{m=1}^M y_m $$
plays a central role.  We start with a rewriting of the log p.d.f. of the factorial kernel:
 \<
 \begin{split}
 	&- 2 \sigma^2 \log p(\y|x) 
 	= \sum_{m=1}^M \bv y_m - x \Vert ^2 + C \\	
 	&= M \Vert x \Vert^2 -2 \langle \sum_{m=1}^M y_m, x\rangle + \sum_{m=1}^M \Vert y_m \Vert^2 + C \\
 	&= M (\Vert x \Vert^2 - 2 \langle \overline{y}, x \rangle + \overline{\Vert y\Vert^2})+ C \\
 	&= M(\Vert x - \overline{y}\Vert^2 + \overline{\Vert y\Vert^2} -\Vert \overline{y} \Vert^2 )+ C,
 \end{split}
 \>
 where $C=2 \sigma^2 M \log Z(\sigma)$ and $\overline{\Vert y\Vert^2} $ is short for
$$
 \overline{\Vert y\Vert^2}  = M^{-1} \sum_{m=1}^M \bv y_m \bv^2.
$$
Now, we view the smoothing kernel $p(\y|x)$ as a Gaussian distribution over $X$ centered at $\overline{y}$:
 \< \begin{split}
 	& \log p(\y) = \log \int p(\y|x) p(x) dx \\
 	&= \log \int \mathcal{N}(x; \overline{y}, \sigma_\eff^2 I_d) p(x) dx + \frac{\Vert \overline{y} \Vert^2- \overline{\Vert y\Vert^2}}{2\sigma_\eff^2} + C' \\
 	&= \log \mathbb{E}_{X\sim \mathcal{N}(\overline{y},\sigma_\eff^2 I_d)} [p(X)]+\frac{\Vert \overline{y} \Vert^2- \overline{\Vert y\Vert^2}}{2\sigma_\eff^2}+C',
 \end{split}
   \>
where $\sigma_{\eff} \coloneqq \sigma/\sqrt{M} $, and $C'=\log Z(\sigma_\eff) - M \log Z(\sigma)$. Note that the first term above is a function of $\overline{y}$ for any distribution~$p_X$, therefore the energy function $f(\y)$ takes the following form:
\< \label{eq:f} f(\y) =  \frac{1}{2\sigma_\eff^{2}}  \left(  \overline{\Vert y\Vert^2}- \Vert \overline{y}\Vert^2 \right) + \varphi(\overline{y}). \>
Next we consider the functional form for the score function  $\g(\y)=-\nabla f(\y)$. Taking gradients leads to:
$$ \label{eq:g} g_m(\y) =  \frac{1}{2\sigma_\eff^{2}}  \left(  2\overline{y}/M  - 2 y_m/M \right) - \nu(\overline{y}) /M ,$$
where $\nu=\nabla \varphi.$ The expression above is written more compactly (to be used below) as
\<\label{eq:g} \sigma^2 g_m(\y)=(\overline{y}-y_m)- \sigma_{\eff}^2 \cdot \nu(\overline{y}).  \>
Finally, the expression for the Bayes estimator $\widehat{x}(\y)$ is derived (combine \autoref{eq:xhat_m} and \autoref{eq:g}):
\< \label{eq:xhat} \widehat{x}(\y) = \overline{y} - \sigma_\eff^2 \cdot \nu(\overline{y}).\>

\subsection{$\gps$ Parametrization}
The results above leads to the following parametrization for the score function:
\newline
\begin{definition}[$\gps$] \label{def:gps} The $\gps$ parametrization of the score function associated with the M-density for $(\sigma, M)$ models is given by (replace $\nu$ with $\nu_\theta$ in \autoref{eq:g})
\< \label{eq:gps} \sigma^2 g_m(\y; \theta)=(\overline{y}-y_m)- \sigma_{\eff}^2 \cdot \nu_\theta(\overline{y}), \>
where $\nu_\theta: \mathbb{R}^d \rightarrow \mathbb{R}^d$  is either parametrized directly/explicitly or  indirectly/implicitly by parametrizing the function $\varphi_\theta: \mathbb{R}^d \rightarrow \mathbb{R}$. In the later case, $\nu_\theta$ is derived as follows:
$$ \nu_\theta = \nabla \varphi_\theta.$$	
\end{definition}
The $\gps$ parametrization has two important properties captured by the following propositions:
\newline
\begin{proposition}[Permutation equivariance property of $\gps$] \label{prop:equivariance} The score function parametrized in $\gps$ is permutation equivariant:
	\< 	 \g_\theta(\pi(\y)) = \pi(\g_\theta(\y)), \>
where $\pi:[M] \rightarrow [M]$ is a permutation of the noise/measurement channels whose action on $\y=(y_1,\dots,y_M)$ and $\g = (g_1,\dots, g_M)$ is to permute the measurement channels:
\begin{align*} \pi((y_1,\dots,y_M)) &= (y_{\pi(1)},\dots,y_{\pi(M)}).	\\
\pi((g_1,\dots,g_M)) &= (g_{\pi(1)},\dots,g_{\pi(M)}).	
\end{align*}

\end{proposition}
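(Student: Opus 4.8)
The plan is to verify the equivariance componentwise, exploiting the single structural fact that the $\gps$ parametrization depends on the measurement index $m$ only through the term $y_m$, while every other ingredient is built from the empirical mean $\overline{y}$, which is itself permutation invariant. So first I would establish that $\overline{y}$ is unchanged under relabeling the channels: writing $\y' = \pi(\y)$ with $y'_m = y_{\pi(m)}$, the sum $\sum_{m=1}^M y'_m = \sum_{m=1}^M y_{\pi(m)}$ is merely a reindexing of $\sum_{m=1}^M y_m$ since $\pi$ is a bijection of $[M]$, hence $\overline{y'} = \overline{y}$. Consequently $\nu_\theta(\overline{y'}) = \nu_\theta(\overline{y})$ as well, and this holds regardless of whether $\nu_\theta$ is parametrized explicitly or implicitly via $\varphi_\theta$, since only the value of its argument enters.

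Next I would compute the $m$-th component of the score at the permuted input directly from \autoref{eq:gps}. Substituting $\overline{y'} = \overline{y}$ and $y'_m = y_{\pi(m)}$ gives $\sigma^2 g_m(\pi(\y); \theta) = (\overline{y} - y_{\pi(m)}) - \sigma_{\eff}^2 \nu_\theta(\overline{y})$. I would then compare this against the $m$-th component of the permuted output $\pi(\g_\theta(\y))$, which by the stated action of $\pi$ on $\g$ equals $g_{\pi(m)}(\y;\theta)$; evaluating \autoref{eq:gps} at the index $\pi(m)$ produces the identical right-hand side $(\overline{y} - y_{\pi(m)}) - \sigma_{\eff}^2 \nu_\theta(\overline{y})$. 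Matching the two expressions shows $g_m(\pi(\y);\theta) = g_{\pi(m)}(\y;\theta)$ for every $m \in [M]$, which is precisely the claimed identity $\g_\theta(\pi(\y)) = \pi(\g_\theta(\y))$.

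There is essentially no obstacle here: the statement is an immediate algebraic consequence of the invariance of $\overline{y}$ together with the fact that $\gps$ isolates the entire index dependence in the single $-y_m$ term. The only point requiring care, and it is purely bookkeeping, is keeping the two uses of $\pi$ distinct, namely its action on the input tuple $\y$ versus its action on the output tuple $\g$, and checking that these are the same permutation so that the two sides align componentwise. I would therefore make the indexing fully explicit, but I anticipate no nontrivial estimate or structural argument is needed.
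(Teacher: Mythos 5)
Your proposal is correct and is exactly the paper's argument: the paper's proof is the one-line observation that $\overline{y}$ is permutation invariant, and your componentwise verification ($g_m(\pi(\y);\theta) = (\overline{y}-y_{\pi(m)})/\sigma^2 - (\sigma_\eff^2/\sigma^2)\,\nu_\theta(\overline{y}) = g_{\pi(m)}(\y;\theta)$) simply makes that one line explicit. No gap and no difference in approach, only in level of detail.
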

\begin{proof}
	The proof is straightforward since $\overline{y}$ in $\gps$ is permutation invariant.
\end{proof} 
The naming $\gps$ has been derived from the statement of \autoref{prop:equivariance}: $\g_\theta$  is a permutation-equivariant score function. Informally (alluding to GPS as the ``global positioning system''), in the $\gps$ parametrization, the \say{coordinates} of the M-density manifold in high dimensions is validly encoded in the sense of respecting its permutation invariance. This important symmetry is broken in the MDAE parametrization studied in~\citep{saremi2022multimeasurement}. 

In addition, in $\gps$, the Bayes estimator (\autoref{eq:xhat}) takes the following parametric form:
\< \label{eq:xhat_theta} \widehat{x}_\theta(\y) = \overline{y}-\sigma_\eff^2 \cdot \nu_\theta(\overline{y}),\>
where the measurement index $m$ does not appear in the final expression\textemdash this is in contrast to the parametrization studied by~\citet{saremi2022multimeasurement}. 

\subsection{$\gps$ is Universal}

Before formalizing the universality of the $\gps$ parametrization in \autoref{prop:universal} below, we define the notion of universality classes associated with M-densities: \newline
\begin{definition}[M-density Universality Classes] \label{def:universality} We define the universality class $[\sigma_\eff]$ as the set of all $(\sigma, M)$ models 
$$ [\sigma_\eff] := \{(\sigma_1,M_1), (\sigma_2,M_2), \dots\},$$
such that for all $(\sigma_i,M_i) \in [\sigma_\eff]$: $$\frac{\sigma_i}{\sqrt{M_i}} =\sigma_\eff.$$ 
In particular, the models $\{(\sigma_\eff \sqrt{M}, M): M \in \mathbb{N}\}$ belong to the universality class $[\sigma_\eff]$. 	
\end{definition}

The universality property of $\gps$ is captured by the following proposition:
\newline
\begin{proposition}[The universal property of $\gps$] \label{prop:universal} For any parameter $\theta$, all $(\sigma, M)$ models that belong to the same universality class and parametrized by $\gps$ are identical in the sense that they incur the same loss  
\< \label{eq:gps-unversality} \mathcal{L}_{\sigma,M}(\theta) = \mathcal{L}_{\sigma',M'}(\theta) \text{ if } \frac{\sigma}{\sqrt{M}}= \frac{\sigma'}{\sqrt{M'}}, \>
where 
$$ \mathcal{L}_{\sigma,M}(\theta)= \mathbb{E}_{(x,\y)\sim p(x) p(\y|x)} \bv x - \widehat{x}_\theta(\y) \bv^2.$$ 
\end{proposition}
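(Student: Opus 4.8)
The plan is to reduce the loss $\mathcal{L}_{\sigma,M}(\theta)$ to an expectation that depends on the pair $(\sigma,M)$ only through the single combination $\sigma_\eff = \sigma/\sqrt{M}$, which is precisely the label of the universality class. The starting observation is structural: by the $\gps$ parametrization (\autoref{def:gps} and \autoref{eq:xhat_theta}), the estimator $\widehat{x}_\theta(\y) = \overline{y} - \sigma_\eff^2\,\nu_\theta(\overline{y})$ depends on the $M$ measurements $\y=(y_1,\dots,y_M)$ \emph{only} through their empirical mean $\overline{y}$ and through the scalar $\sigma_\eff$. Consequently the integrand $\|x-\widehat{x}_\theta(\y)\|^2$ is a function of the pair $(x,\overline{y})$ alone, so it suffices to identify the joint law of $(X,\overline{Y})$ induced by the Gaussian MNM.

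First I would compute the conditional law of $\overline{Y}$ given $X=x$. Under the MNM we have $Y_m = x + \sigma Z_m$ with $Z_m\sim\mathcal{N}(0,I_d)$ i.i.d., hence $\overline{Y} = x + \tfrac{\sigma}{M}\sum_{m=1}^M Z_m$. Since $\tfrac{1}{\sqrt{M}}\sum_m Z_m \sim \mathcal{N}(0,I_d)$, this rewrites as $\overline{Y} = x + \sigma_\eff\, W$ with $W\sim\mathcal{N}(0,I_d)$, i.e.
\[
\overline{Y}\mid X=x \ \sim\ \mathcal{N}(x,\sigma_\eff^2 I_d).
\]
The crucial point is that this conditional law depends on $(\sigma,M)$ only through $\sigma_\eff$. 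This is the same $\sigma_\eff$ that appeared when the smoothing kernel was viewed as a Gaussian over $X$ centered at $\overline{y}$ in the derivation preceding \autoref{eq:f}, so the computation is consistent with the earlier change of viewpoint.

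Next I would substitute into the loss and carry out the change of variables from $\y$ to $\overline{y}$, writing
\[
\mathcal{L}_{\sigma,M}(\theta) = \mathbb{E}_{x\sim p(x)}\ \mathbb{E}_{\overline{Y}\mid X=x}\,\big\| x - \overline{Y} + \sigma_\eff^2\,\nu_\theta(\overline{Y})\big\|^2,
\]
and expressing the inner expectation as an integral against $\mathcal{N}(\,\cdot\,;x,\sigma_\eff^2 I_d)$. The resulting expression is a functional of $\theta$, of the fixed data law $p_X$, and of $\sigma_\eff$ only, with no residual dependence on $\sigma$ or $M$ individually. Hence whenever $\sigma/\sqrt{M}=\sigma'/\sqrt{M'}$ the two models yield literally the same integral, which is exactly \autoref{eq:gps-unversality}.

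I do not expect a serious obstacle: the entire content is the distributional identity $\overline{Y}\mid X\sim\mathcal{N}(X,\sigma_\eff^2 I_d)$ together with the fact that the $\gps$ estimator sees $\y$ only through $\overline{y}$. The one point requiring care is to confirm that nothing outside $\overline{y}$ and $\sigma_\eff$ enters the estimator\textemdash in particular that the explicit $\sigma^2$ appearing in \autoref{eq:gps} is fully absorbed into $\sigma_\eff^2$ once $\widehat{x}_\theta$ is formed\textemdash which is immediate from \autoref{eq:xhat_theta}. It is this cancellation of the raw $\sigma^2$ that makes the $\gps$ parametrization, rather than a generic score parametrization on $\RR^{Md}$, the reason the loss collapses onto the invariant $\sigma_\eff$.
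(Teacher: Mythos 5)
Your proposal is correct and follows essentially the same route as the paper's proof: both reduce the loss via \autoref{eq:xhat_theta} to an expression in $(x,\overline{y})$ alone and then invoke the distributional identity $\overline{Y}\mid X=x \sim \mathcal{N}(x,\sigma_\eff^2 I_d)$ (the paper phrases this as $\overline{\gamma}=M^{-1}\sum_m \gamma_m$ having the law $\mathcal{N}(0,\sigma_\eff^2 I_d)$), so the loss depends on $(\sigma,M)$ only through $\sigma_\eff$. Your added remark that the raw $\sigma^2$ in \autoref{eq:gps} is fully absorbed into $\sigma_\eff^2$ once $\widehat{x}_\theta$ is formed is a correct and worthwhile sanity check, but it does not change the argument.
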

\begin{proof}
	Using \autoref{eq:xhat_theta}, we have
	\< \label{eq:loss} \mathcal{L}_{\sigma, M}(\theta)= \mathbb{E}_{(x,\y)\sim p(x) p(\y|x)} \bv x - \overline{y}+ \sigma_\eff^2 \cdot \nu_\theta(\overline{y})\bv^2.\>
Note that $x - \overline{y}$ has the same law as $\mathcal{N}(0,\sigma_\eff^2 I_d)$, where $\sigma_\eff = \sigma/\sqrt{M}$. Therefore, the learning objective has the interpretation that $\nu_\theta$ makes predictions on the residual noise left in the empirical mean of the noisy measurements since $\overline{y}=x+\bar{\gamma}$, where $\bar{\gamma}=M^{-1}\sum_{m=1}^M \gamma_m$ and $\gamma_m$ are independent samples from $\mathcal{N}(0,\sigma^2 I_d)$. This observation can be made explicit by rewriting $\mathcal{L}_{\sigma, M}(\theta)$ as:
$$ \mathbb{E}_{x\sim p(x), \{\gamma_m \sim \mathcal{N}(0,\sigma^2 I_d)\}_{m=1}^M} \bv \overline{\gamma} - \sigma_\eff^2 \cdot \nu_\theta(x + \overline{\gamma}) \bv^2.$$
The statement of the proposition follows since $\bar{\gamma}$ has the same law as $\mathcal{N}(0,\sigma_{\eff}^2I_d)$ for $(\sigma,M)$ and $(\sigma',M')$ models since they are both in the universality class $[\sigma_\eff]$.\end{proof}

\begin{corollary} \label{corollary:xhatlaw} The laws of $\widehat{x}_\theta(\Y)$ are identical for all $(\sigma, M)$ models in the same universality class and for all $\theta$ in the $\gps$ parametrization.
\end{corollary}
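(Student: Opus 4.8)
The plan is to observe that the corollary is an immediate distributional strengthening of \autoref{prop:universal}: whereas that proposition equates a single scalar functional (the loss) across a universality class, here I would track the entire law of the random vector $\widehat{x}_\theta(\Y)$. The central structural fact, read off directly from \autoref{eq:xhat_theta}, is that in the $\gps$ parametrization $\widehat{x}_\theta(\Y)$ depends on $\Y$ only through the empirical mean $\overline{y}$: explicitly $\widehat{x}_\theta(\Y) = h_\theta(\overline{Y})$ with $h_\theta(u) := u - \sigma_\eff^2\, \nu_\theta(u)$. Crucially, for fixed $\theta$ the map $h_\theta$ depends on the model $(\sigma,M)$ only through $\sigma_\eff = \sigma/\sqrt{M}$, which is by definition constant on a universality class $[\sigma_\eff]$.

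First I would pin down the law of $\overline{Y}$. Writing $Y_m = X + \gamma_m$ with $X \sim p(x)$ and $\gamma_m \sim \mathcal{N}(0,\sigma^2 I_d)$ i.i.d.\ and independent of $X$, we have $\overline{Y} = X + \overline{\gamma}$ where $\overline{\gamma} = M^{-1}\sum_{m=1}^M \gamma_m$. A one-line variance computation gives $\overline{\gamma} \sim \mathcal{N}(0,\sigma_\eff^2 I_d)$, so the joint law of $(X,\overline{\gamma})$ is the product of $p(x)$ and $\mathcal{N}(0,\sigma_\eff^2 I_d)$, and the law of $\overline{Y}$ is the convolution of $p(x)$ with $\mathcal{N}(0,\sigma_\eff^2 I_d)$. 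Both depend on the model only through $\sigma_\eff$, hence are identical for any two members of $[\sigma_\eff]$.

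The conclusion then follows by a pushforward argument: the law of $\widehat{x}_\theta(\Y)$ equals $(h_\theta)_\#$ applied to the law of $\overline{Y}$, and for two models in the same class both the map $h_\theta$ and the base measure (the law of $\overline{Y}$) coincide, so the pushforwards coincide as well. I would present it for the two representatives $(\sigma,M)$ and $(\sigma',M')$ with $\sigma/\sqrt{M}=\sigma'/\sqrt{M'}$ used in \autoref{prop:universal} to keep the notation parallel.

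There is essentially no analytic obstacle here; the work is entirely in the bookkeeping. The one point that deserves care---and the only place where the argument could go wrong if stated loosely---is the claim that $\widehat{x}_\theta$ is a function of $\overline{Y}$ \emph{alone}: this is a genuine feature of the $\gps$ parametrization (the measurement index $m$ cancels in \autoref{eq:xhat_theta}) and would fail for the index-dependent MDAE parametrization, where $\widehat{x}$ also depends on the individual $y_m$. I would flag this explicitly so the reader sees that the corollary is really a consequence of permutation equivariance together with the Gaussian averaging, rather than of the loss identity per se.
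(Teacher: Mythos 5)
Your proof is correct and takes essentially the same route as the paper's: both arguments reduce the claim to the two facts that in the $\gps$ parametrization $\widehat{x}_\theta(\Y)$ is the model-independent map $u \mapsto u - \sigma_\eff^2\,\nu_\theta(u)$ applied to $\overline{Y}$, and that $\overline{Y}$ has the same law as $X + \mathcal{N}(0,\sigma_\eff^2 I_d)$ for every model in $[\sigma_\eff]$. Your write-up merely makes explicit the pushforward step and the variance computation for $\overline{\gamma}$, which the paper leaves folded into the proof of \autoref{prop:universal}.
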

\begin{proof}
In the $\gps$ parametrization $\widehat{x}_\theta(\Y) = \overline{Y}-\sigma_\eff^2 \cdot \nu_\theta(\overline{Y})$ (\autoref{eq:xhat_theta}). The proof then follows from the proof in the above proposition since $\overline{Y}$ has the same law as $X+\mathcal{N}(0,\sigma_\eff^2 I_d) $ for all models in $[\sigma_\eff]$.
	\end{proof}
\section{On the shape of M-densities} \label{sec:mixing}
In the previous section we established that $(\sigma, M)$ models in the same universality class $[\sigma_\eff]$ (\autoref{def:universality}) are equivalent in the sense formalized in \autoref{prop:universal} and \autoref{corollary:xhatlaw}. In this section, we switch our focus to how difficult it is to sample universal M-densities. In particular, we formalize the intuition that $(\sigma_\eff \sqrt{M}, M) \in [\sigma_\eff]$ become more spherical by increasing $M$. The analysis also sheds some light on the geometry of M-densities as one increases $M$ (for a fixed $\sigma$). 

An important parameter characterizing the shape of log-concave densities is the condition number denoted by $\kappa$ which measures how elongated the density is~\citep[Section 1.4.1]{cheng2018underdamped}. The condition number appears in the mixing time analysis of log-concave densities, e.g. in the form $\kappa^2$ in~\citep{cheng2018underdamped}. Intuitively, for poorly conditioned densities one has to use a small step size and that will lead to long mixing times. 

For Gaussian densities $$p(x) = \mathcal{N}(x; \mu, \Sigma),$$ the condition number denoted by $\kappa$ is given by \< \kappa = \lambda_{\max}(F)/ \lambda_{\min}(F),\> where $\lambda(F)$ denotes the spectrum of the inverse covariance matrix $F \coloneqq \Sigma^{-1}$.

Next, we study the full spectrum of the corresponding $M$-density and give an expression for the condition number of $(\sigma, M)$ models. We assume without loss of generality a basis in $\mathbb{R}^d$ where the density is centered at $\mu=0$ and the covariance matrix is  diagonalized: $\Sigma_{ij} = \tau_i^{2} \delta_{ij}$. Therefore,
$$ p(x) = \prod_{i=1}^d \frac{1}{Z(\tau_i)}\exp{\left(-\frac{x_i^2}{2\tau_i^2}\right)},$$
and $\kappa = \tau_{\max}^2/\tau_{\min}^2$. Next we study the condition number for $(\sigma, M)$ models. The case $M=1$ is simple since $$F = (\Sigma + \sigma^2 I_d)^{-1},$$ therefore \<\kappa(\sigma, 1) = \frac{\tau_{\max}^2 + \sigma^2}{\tau_{\min}^2 + \sigma^2}.\>

We switch to studying the full spectrum of the covariance matrix for $(\sigma, M)$ models where $M>1$. We start with the expression for $p(\y)$ given below up to a normalizing constant for the general M-density defined by the noise levels $(\sigma_1,\sigma_2, \dots, \sigma_M)$:
\<\begin{split}
	 p(\y) &\propto \int \prod_{m=1}^M \mathcal{N}(x;y_m,\sigma_m^2 I_d) \cdot \mathcal{N}(x;0,\Sigma) \;dx \\
	 &\propto \prod_{i=1}^d  \int \exp{\left(-\sum_{m=1}^M\frac{\left(y_{mi} - x_i\right)^2}{2\sigma_m^2}-\frac{x_i^2}{2\tau_i^2}\right)} dx_i \\
	 &= \prod_{i=1}^d \int \exp{\left(-\frac{(x_i-\alpha_i)^2}{2 \beta_i^2}-\gamma_i\right)} dx_i\\
	 &\propto \prod_{i=1}^d \exp(-\gamma_i).
\end{split}
\>
The expressions for $\alpha_i$, $\beta_i$, and $\gamma_i$ are given next by completing the square via matching second, first and zeroth derivative (in that order) of the left and right hand sides below
\< -\sum_{m=1}^M\frac{\left(y_{mi} - x_i\right)^2}{2\sigma_m^2}-\frac{x_i^2}{2\tau_i^2} = -\frac{(x_i-\alpha_i)^2}{2 \beta_i^2}-\gamma_i \>
evaluated at $x_i=0$. The following three equations follow:
\< \frac{1}{\beta_i^2}= \sum_{m=1}^M \frac{1}{\sigma_m^2} + \frac{1}{\tau_i^2} \hspace{1cm},  \>
\<\frac{\alpha_i}{\beta_i^2} = \sum_{m=1}^M \frac{y_{mi}}{\sigma_m^2} \Rightarrow \alpha_i = \omega_i^2 \sum_{m=1}^M \frac{y_{mi}}{\sigma_m^2}, \>
\< 
	 -\frac{\alpha_i^2}{2 \beta_i^2}-\gamma_i = -\sum_{m=1}^M \frac{y_{mi}^2}{2\sigma_m^2} \\ \Rightarrow \gamma_i = \sum_{m=1}^M \frac{y_{mi}^2}{2\sigma_m^2} - \frac{1}{2} \omega_i^2 \left(\sum_{m=1}^M \frac{y_{mi}}{\sigma_m^2}\right)^2,
  \>
where \< \omega_i^2 \coloneqq \left(\sum_{m=1}^M \frac{1}{\sigma_m^2} + \frac{1}{\tau_i^2} \right)^{-1}.\>
Therefore the energy function associated with the random variable $\y$ is given by:
\< f(\y) =  \sum_m \frac{\Vert y_m \Vert^2}{2\sigma_m^2}- \frac{1}{2} \sum_{i=1}^d \omega_i^2 \left(\sum_{m=1}^M \frac{y_{mi}}{\sigma_m^2}\right)^2. \>
The energy function can be written more compactly by introducing the matrix $\F$:
\< \begin{split} 
f(\y) &= \frac{1}{2} \langle \y, \F \y \rangle,\\ 
\F_{mi,m'i'} &= [ \sigma_m^{-2}(1-\omega_i^2 \sigma_m^{-2}) \delta_{mm'}- \omega_i^2 \sigma_m^{-2}  \sigma_{m'}^{-2}(1-\delta_{mm'}) ] \delta_{ii'}.
 \end{split}  \>
In words, the $Md \times Md$ dimensional matrix $\F$ is block diagonal with $d$ blocks of size $M\times M$. The blocks themselves capture the interactions between different measurements indexed by $m$ and $m'$. To study the spectrum of the covariance matrix, we next focus on $(\sigma, M)$ models, i.e., the permutation-invariant case where $\sigma_m = \sigma$ for all $m \in [M]$: 
\< 
\begin{split}
	\F_{mi,m'i'} &= [ \sigma^{-2}(1-\omega_i^2 \sigma^{-2}) \delta_{mm'}- \omega_i^2 \sigma^{-4} (1-\delta_{mm'}) ] \delta_{ii'},\\
	\omega_i^2 &= (M \sigma^{-2} + \tau_i^{-2})^{-1}.
\end{split}
\>
The $M\times M$ blocks of the matrix $\F$ have the form:
$$ \F_i = \begin{pmatrix}
  a_i & b_i & \dots & b_i \\
  b_i & a_i & \dots & b_i \\
  \vdots & 	& \ddots	 \\
  b_i & b_i &\cdots & a_i
 \end{pmatrix},
$$
where
\begin{align}
	a_i &= \sigma^{-2}(1-\omega_i^2 \sigma^{-2}),  \\
	b_i &= - \omega_i^2 \sigma^{-4} .
\end{align}
It is straightforward to find the $M$ eigenvalues of the matrix~$\F_i$: \begin{itemize}
	\item $M-1$ degenerate eigenvalues equal to $a_i-b_i$ corresponding to the eigenvectors 
$$\{(1, -1, 0, \dots, 0)^\top, (1, 0, -1, \dots, 0)^\top, \dots, (1, 0, 0, \dots, -1)^\top\},$$
	\item one eigenvalue equal to $a_i+(M-1)b_i$ corresponding to the eigenvector $(1, 1, \dots, 1)^\top$.
\end{itemize} 
Since $a_i>0,~ b_i<0$ we arrive at: $$
	\lambda_{\rm max}(\F) = a_i - b_i = \sigma^{-2} $$ which is $(M-1) d$ degenerate on the full matrix $\F$. The remaining $d$ eigenvalues are given by 
\<
	\lambda_i = a_i + (M-1) b_i \\
	= \sigma^{-2}(1-M \omega_i^2 \sigma^{-2}),\>
	the smallest of which is given by
\< \begin{split} \lambda_{\rm min}(\F) &= \sigma^{-2}(1-M \sigma^{-2} \omega_{\rm max}^2 ) = 	\sigma^{-2}\left(1-M\sigma^{-2} \frac{1}{M \sigma^{-2} + \tau_{\rm max}^{-2}}\right) =  \frac{\sigma^{-2}}{1+M\sigma^{-2} \tau_{\rm max}^2} .
 \end{split}
\>
It follows:
\< \kappa(\sigma,M) =  \lambda_{\rm max}(\F)/\lambda_{\rm min}(\F)=1+M\sigma^{-2} \tau_{\rm max}^2 . \>

\section{Experiments}
\label{sec:exp}

We conducted experiments on the CIFAR-10 dataset \citep{krizhevsky2009learning} of 32$\times$32 color images from 10 classes. The goal of these experiments is to empirically study the results of sampling from $\gps$ models in the same universality class as implied by our theoretical analysis in \autoref{sec:universal}.

\paragraph{Training.}
$\nu_{\theta}$ (from \autoref{eq:xhat_theta}) was parameterized using the ``U-Net'' used in recent work on generative modeling on this dataset \citep{dhariwal2021diffusion}. We set $\sigma_\eff=0.25$, by choosing $\sigma=1$ and $M=16$ for training the network.
Similar to the MDAE parametrization from \citet{saremi2022multimeasurement}, learning essentially involves training a denoising autoencoder with a mean squared loss.
For optimization, the Adagrad optimizer \citep{duchi2011adaptive} was used with a batch size of 128 and maximum 400 epochs of training. 
The learning rate was initialized \num{1e-6} and scheduled to linearly increase to \num{1.5} over \num{1e6} updates (though training terminated earlier).
During training, the FID score \citep{heusel2017gans} computed using samples from 125 parallel MCMC chains (400 samples each, resulting in 50,000 samples total) was monitored at regular intervals, and the model with the lowest FID score was selected as a form of early stopping.

\paragraph{Sampling Results.} Our sampling algorithm is based on the walk-jump sampling~\citep{saremi2019neural}  that samples noisy data using the learned score function with Langevin MCMC (walk) together with the Bayes estimator of clean data (jump). For Langevin MCMC we considered three different algorithms~\citep{sachs2017langevin, cheng2018underdamped, shen2019randomized}. We settled on the algorithm by~\citet{sachs2017langevin} early on as it was more reliable in the small-scale experiments that we performed (see \autoref{fig:gallery}(e) for a visual comparison to the randomized midpoint method by~\citet{shen2019randomized}). We set the step size $\delta= \sigma/2$ for all $(\sigma, M)$ models and did extensive experiments on tuning the friction parameter. The results are shown in \autoref{fig:fid} where the FID score is obtained averaged over 5 random seeds. In the algorithm by~\citet{sachs2017langevin} the friction parameter $\gamma$ only shows up in the form $\gamma_\eff = \gamma \cdot \delta$ which we call effective friction. This is especially important in our model since step sizes vary greatly between different $(\sigma, M)$ models. The best  results were obtained for $(0.25, 1)$ model with the FID of {\bf 14.15}. In addition, our results in \autoref{fig:gallery} are remarkable in qualitatively demonstrating fast mixing in long-run MCMC chains, where diverse classes are visited in a single chain: such fast-mixing MCMC chains on CIFAR-10 have not been reported in the literature.

\begin{figure}[h!]
    \centering
    \includegraphics[width=0.49\textwidth]{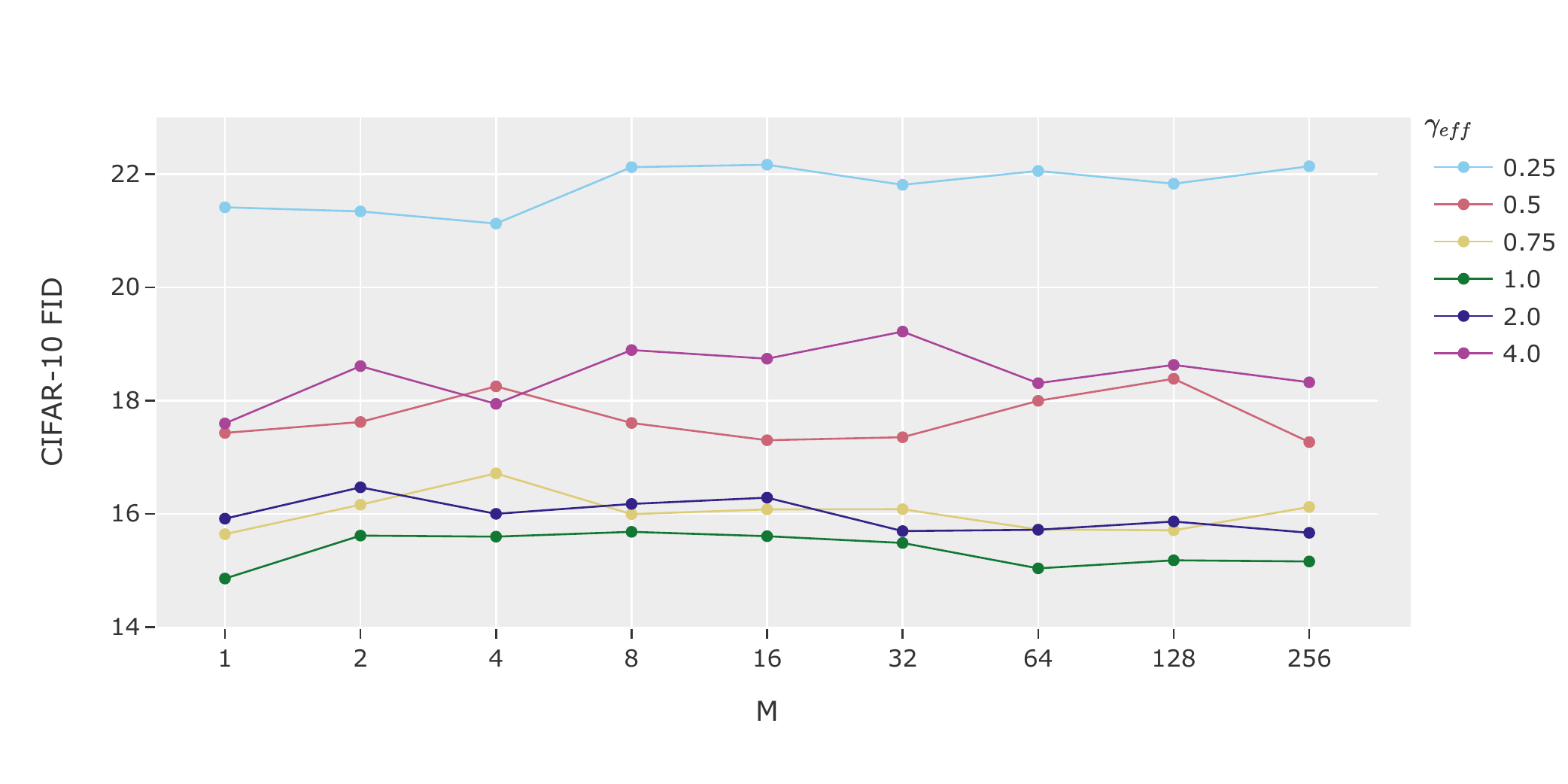}
    \includegraphics[width=0.49\textwidth]{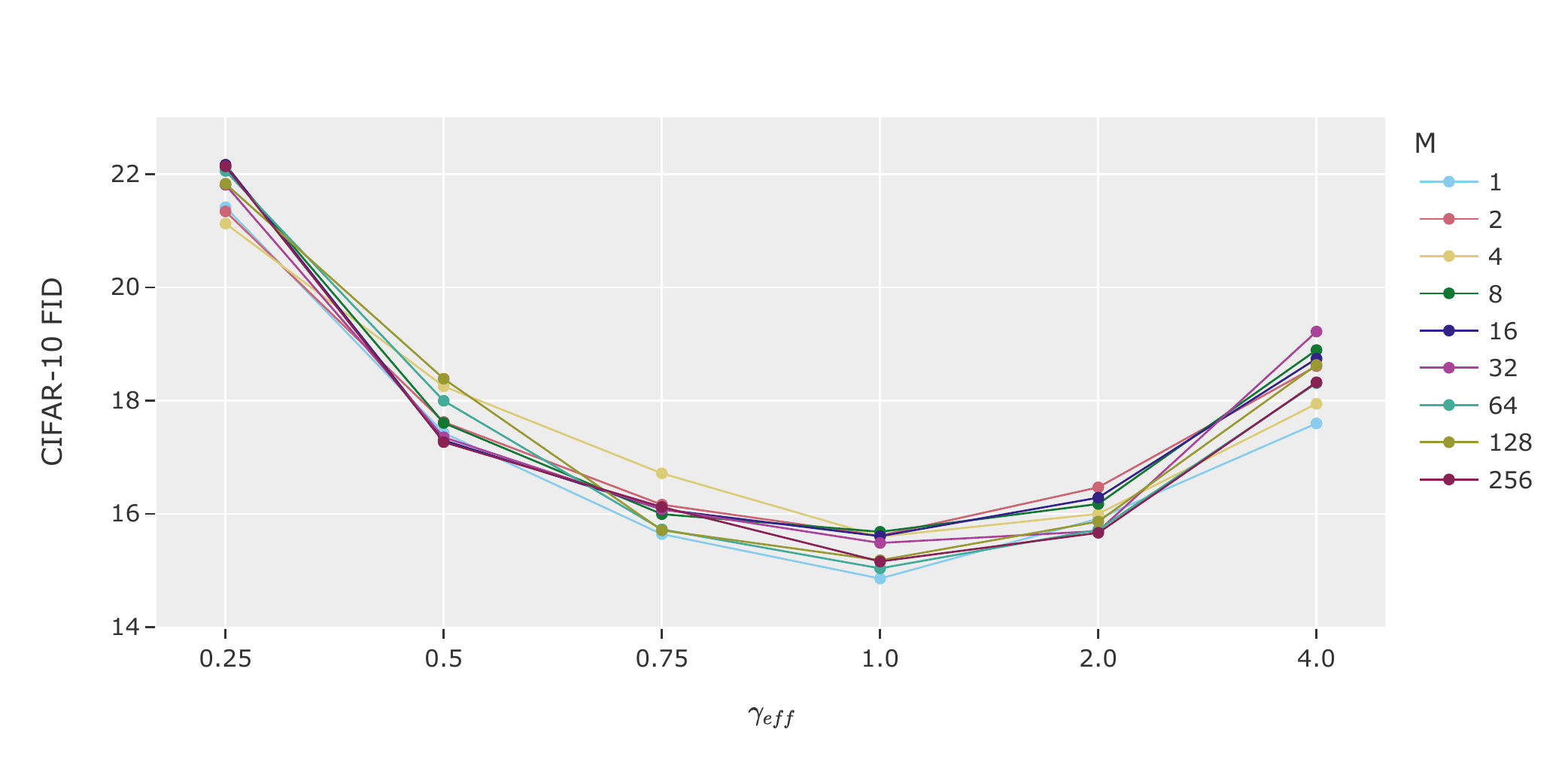}
    \caption{CIFAR-10 FID scores obtained when tuning the value of $\gamma_\eff$ for various values of $M$, for a model trained with $\sigma_\eff=0.25$.}
    \label{fig:fid}
\end{figure}

\begin{figure}[t!] 
\begin{center}
\begin{subfigure}[$(\sfrac{1}{4},1)$, $\delta=\sfrac{1}{8}$, $\gamma=8$]
 {\includegraphics[width=0.9\textwidth]{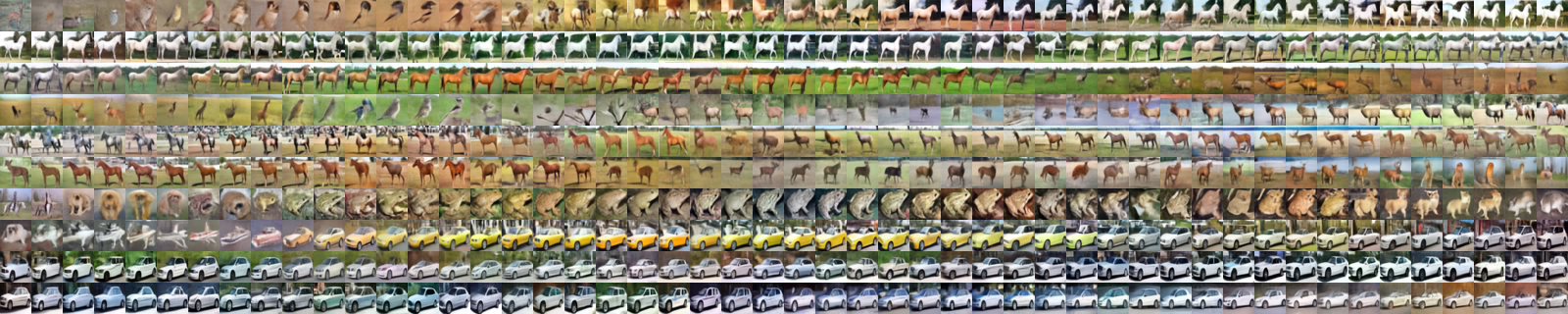}}
\end{subfigure}
\begin{subfigure}[$(1,16)$, $\delta=\sfrac{1}{2}$, $\gamma=2$]
 {\includegraphics[width=0.9\textwidth]{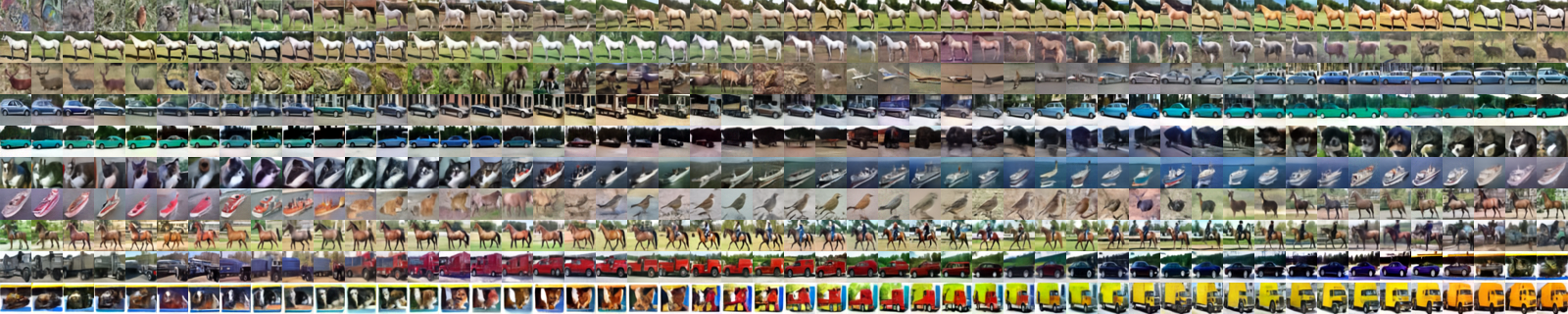}}
\end{subfigure}
\begin{subfigure}[$(2,64)$,  $\delta=1$, $\gamma=1$]
 {\includegraphics[width=0.9\textwidth]{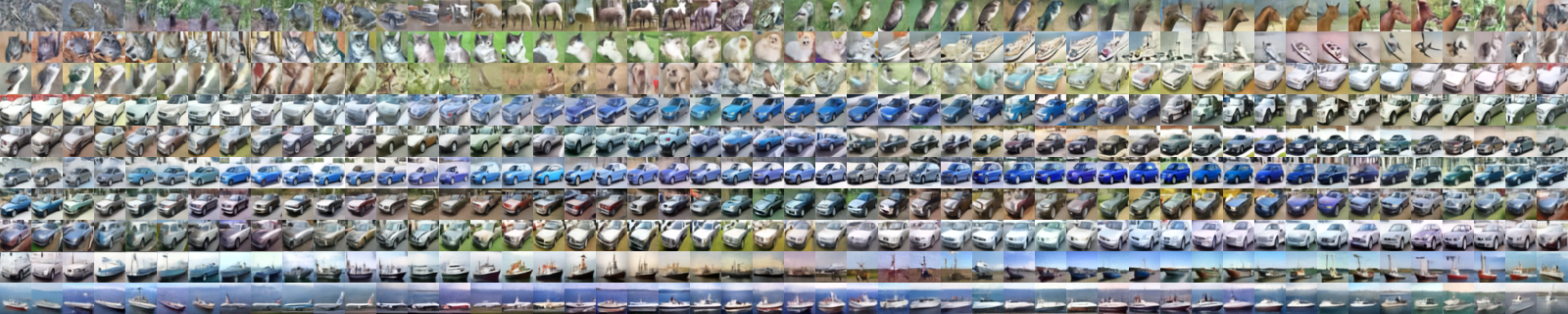}}
\end{subfigure}
\begin{subfigure}[$(4,256)$, $\delta=2$, $\gamma=0.5$]
 {\includegraphics[width=0.9\textwidth]{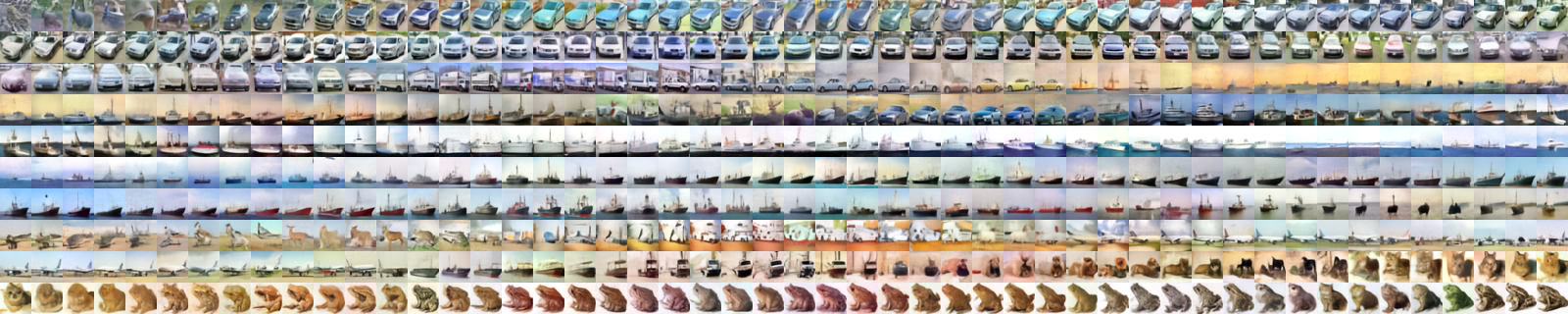}}
\end{subfigure}
\begin{subfigure}[$(1, 16)$, $\delta=\sfrac{1}{2}$, $\gamma=2$]
{\includegraphics[width=0.9\textwidth]{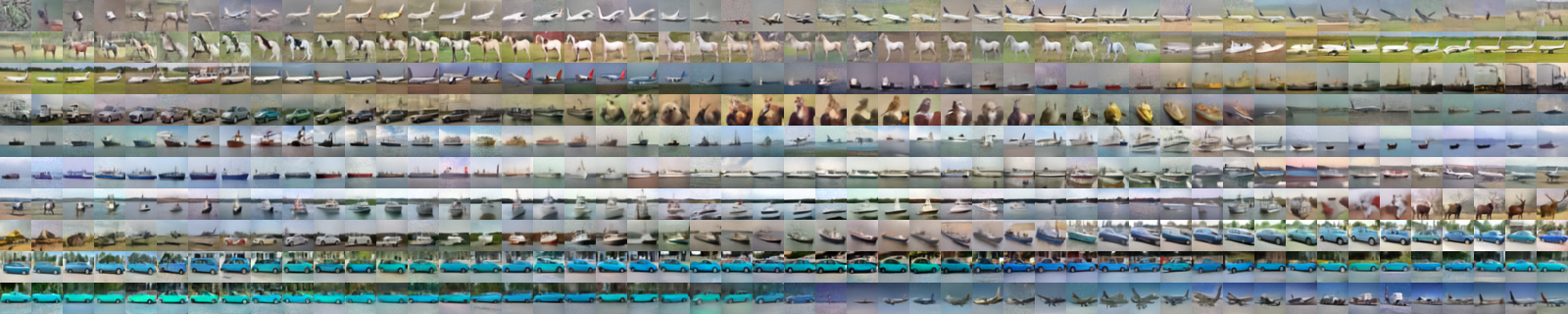}}
\end{subfigure}
\caption{Examples of long-run MCMC chains on CIFAR-10 dataset for $(\sigma,M)$ models in the $[\sigma_\eff = 0.25]$ universality class. {\bf Each panel represents a single MCMC chain.} Only 20 steps are taken per image, starting from noise, a total of 10\,K steps (viewed left-to-right, top-to-bottom). We set $\delta = \sigma/2$, and we set the friction $\gamma$ from \citet{sachs2017langevin}, panels (a)-(d), such that $\gamma_\eff \coloneqq \gamma \delta =1$.  In the bottom panel we show the performance of randomized midpoint method~\citep{shen2019randomized} used for Langevin MCMC in the walk-jump sampling. The random seed is fixed across runs. Best seen zoomed on a computer screen.}
\label{fig:gallery}
\end{center}
\end{figure}

\clearpage

\section{Conclusion}
This work was primarily concerned with the theoretical understanding of smoothing methods using factorial kernels proposed by~\citet{saremi2022multimeasurement}, where in particular we focused on the permutation-invariant case in which the model is defined by a single noise scale. We showed such models are grouped into universality classes in which the densities can be easily mapped to each other, and we introduced the $\gps$ parametrization to utilize that. Theoretically, the models that belong to the same universality class should have very different sampling properties and we had an analysis of that here, focused on studying the condition number. 

Our experimental results on CIFAR-10 were surprising on two fronts: (i) We achieved low FID scores which have been argued to be not feasible for denoising models with a single noise scale. In fact, the research on generative models based on denoising autoencoders (DAE) came to a halt around 2014 with the invention of GANs~\citep{goodfellow2014generative}, and we were ourselves surprised by a simple  model such as $(0.25,1)$ outperforming BigGAN~\citep{brock2018large} on the CIFAR-10 challenge. Note that, $(0.25,1)$ is essentially a DAE with an empirical Bayes interpretation~\citep{saremi2019neural}. (ii) We also found it surprising that our experimental results did not show any benefit for larger $M$ models, but that needs more investigation in future research. In particular, there might exist more ``clever samplers'' that need to be invented for exploiting the structure of $(\sigma, M)$ models.

\section*{Acknowledgement}
We would like to thank Ruoqi Shen for communication regarding their randomized midpoint method.

\bibliography{biblio}
\bibliographystyle{apalike}

\end{document}